\renewcommand{\ps@jmlrtps}{
    \let\@mkboth\@gobbletwo
    \def\@oddhead{} 
    \let\@evenhead\@oddhead
    \def\@oddfoot{\@titlefoot}
    \let\@evenfoot\@oddfoot
}
\newcommand{\tabref}[1]{Table~\ref{#1}}
\newcommand{\methodtitle}{Event2Vec}
\newcommand{\fig}[1]{Figure~\ref{#1}}
\title[\methodtitle: A Geometric Approach to Learning Composable Representations]{\methodtitle: A Geometric Approach to Learning Composable Representations of Event Sequences}
\author{
 \Name{Antonin Sulc}\\
 \Email{asulc@lbl.gov}\\
 \addr{Lawrence Berkeley National Laboratory\\
 Berkeley, CA, U.S.A.}
 }
\begin{document}

\maketitle

\begin{abstract}
The study of neural representations, both in biological and artificial systems, is increasingly revealing the importance of geometric and topological structures. Inspired by this, we introduce Event2Vec, a novel framework for learning representations of discrete event sequences. Our model leverages a simple, additive recurrent structure to learn composable, interpretable embeddings. We provide a theoretical analysis demonstrating that, under specific training objectives, our model's learned representations in a Euclidean space converge to an ideal additive structure. This ensures that the representation of a sequence is the vector sum of its constituent events, a property we term the linear additive hypothesis. To address the limitations of Euclidean geometry for hierarchical data, we also introduce a variant of our model in hyperbolic space, which is naturally suited to embedding tree-like structures with low distortion. We present experiments to validate our hypothesis. Quantitative evaluation on the Brown Corpus yields a Silhouette score of 0.0564, outperforming a Word2Vec baseline (0.0215), demonstrating the model's ability to capture structural dependencies without supervision.
\end{abstract}

\section{Introduction}

The convergence of neuroscience and machine learning has highlighted the critical role of geometry in shaping neural representations~\cite{bronstein2017geometric}. In the brain, neural activity often unfolds on low-dimensional manifolds, reflecting the underlying structure of tasks and environments~\cite{kriegeskorte2013representational}. Similarly, in artificial intelligence, principles like equivariance and compositionality are key to developing generalizable and interpretable models~\cite{cohen2016group}. This paper contributes to this effort by investigating how a simple geometric prior can lead to highly structured and interpretable representations of sequential data.

In this work\footnote{Code available at \url{https://github.com/sulcantonin/event2vec_public}}, we address the challenge of learning representations for sequences of discrete events by introducing \methodtitle, a model designed to learn geometrically structured and sequentially compositional representations. Our central hypothesis, which we term the~\textit{linear additive hypothesis}~\cite{mikolov2013exploiting}, is that the representation of an entire event history can be modeled as the vector sum of the embeddings of its constituent events. While based on addition, this structure enables a rich vector arithmetic, allowing for both the composition (via addition) and decomposition (via subtraction) of event trajectories. This high degree of interpretability allows for reasoning about entire trajectories; for instance, the displacement vector between first job and promotion (a subtraction) can represent the abstract concept of 'career progression'. Such a structure allows us to analyze a complex event trajectory by deconstructing it into its ordered, constituent steps, providing a clear path towards geometric mechanistic interpretability for sequential data.

We present two variants of our model: one operating in a standard Euclidean space and another in a hyperbolic space. Hyperbolic geometry is particularly well-suited for embedding hierarchical or tree-like structures with low distortion~\cite{nickel2017poincare}.
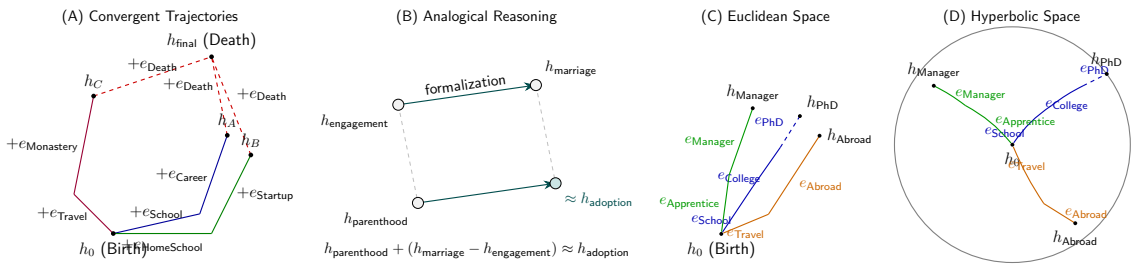
\begin{figure}[h!]
\sffamily
\resizebox{1.0\linewidth}{!}{%
\begin{tabular}{cccc}
(A) Convergent Trajectories & (B) Analogical Reasoning & (C) Euclidean Space & (D) Hyperbolic Space\\
\begin{tikzpicture}[
    font=\sffamily\large, 
    path1/.style={thick, draw=blue!60!black},
    path2/.style={thick, draw=green!50!black},
    path3/.style={thick, draw=purple!80!black},
    death_vec/.style={thick, draw=red!80!black, dashed},
    state_dot/.style={circle, fill=black, inner sep=1.2pt}
]
    \coordinate (h0) at (0,0);
    \coordinate (h_death) at (2.5, 4.5);

    \coordinate (p1_1) at (2.2, 0.5);
    \coordinate (p1_2) at (2.9, 2.5);
    \draw[path1] (h0) -- (p1_1) node[midway, above] {$+e_{\text{School}}$};
    \draw[path1] (p1_1) -- (p1_2) node[midway, left] {$+e_{\text{Career}}$};
    \draw[death_vec] (p1_2) -- (h_death) node[midway, above left] {$+e_{\text{Death}}$};
    \node[state_dot, label=above:{$h_A$}] at (p1_2) {};

    \coordinate (p2_1) at (2.5, 0);
    \coordinate (p2_2) at (3.5, 2.0);
    \draw[path2] (h0) -- (p2_1) node[midway, below] {$+e_{\text{HomeSchool}}$};
    \draw[path2] (p2_1) -- (p2_2) node[midway, right] {$+e_{\text{Startup}}$};
    \draw[death_vec] (p2_2) -- (h_death) node[midway, above right] {$+e_{\text{Death}}$};
    \node[state_dot, label=above:{$h_B$}] at (p2_2) {};

    \coordinate (p3_1) at (-1.0, 1.0);
    \coordinate (p3_2) at (-0.5, 3.5);
    \draw[path3] (h0) -- (p3_1) node[midway, left] {$+e_{\text{Travel}}$};
    \draw[path3] (p3_1) -- (p3_2) node[midway, left] {$+e_{\text{Monastery}}$};
    \draw[death_vec] (p3_2) -- (h_death) node[midway, above] {$+e_{\text{Death}}$};
    \node[state_dot, label=above:{$h_C$}] at (p3_2) {};

    \node[state_dot, label=below:$h_0$ (Birth)] at (h0) {};
    \node[state_dot, label=above:$h_{\text{final}}$ (Death)] at (h_death) {};
\end{tikzpicture} & 

\begin{tikzpicture}[
    font=\sffamily,
    state_dot/.style={circle, draw=black, fill=gray!10, minimum size=8pt, inner sep=0pt},
    analogy_vec/.style={thick, draw=teal!60!black, -{Stealth[length=3mm]}},
    connector/.style={thin, draw=gray!50, dashed}
]
    \coordinate (engagement) at (0,0);
    \coordinate (marriage) at (3.5, 0.5);
    \coordinate (parenthood) at (0.5, -2.5);
    
    \coordinate (adoption) at ($(parenthood) + (marriage) - (engagement)$);

    
    \draw[connector] (engagement) -- (parenthood);
    \draw[connector] (marriage) -- (adoption);

    \draw[analogy_vec] (engagement) -- (marriage) node[midway, above, sloped] {formalization};
    \draw[analogy_vec] (parenthood) -- (adoption);

    \node[state_dot, label=below left:$h_{\text{engagement}}$] at (engagement) {};
    \node[state_dot, label=above right:$h_{\text{marriage}}$] at (marriage) {};
    \node[state_dot, label=below left:$h_{\text{parenthood}}$] at (parenthood) {};
    
    \node[state_dot, fill=teal!20, label=below right:{\color{teal!60!black}$\approx h_{\text{adoption}}$}] at (adoption) {};

    \node[align=center, font=\normalsize] at (2, -3.7) 
        {$h_{\text{parenthood}} + (h_{\text{marriage}} - h_{\text{engagement}}) \approx h_{\text{adoption}}$};
\end{tikzpicture} &
\begin{tikzpicture}[
    font=\sffamily\large,
    state_dot/.style={circle, fill=black, inner sep=1pt},
    event_vec/.style={thick}
]
    \node[state_dot, label=below:{$h_0$ (Birth)}] (O) at (0,0) {};

    \coordinate (edu1) at (0.8, 1.2);
    \coordinate (edu2) at (1.5, 2.2);
    \coordinate (edu3) at (2.0, 3.0);
    \draw[event_vec, blue!70!black] (O) -- node[midway, below left, font=\normalsize] {$e_{\text{School}}$} (edu1);
    \draw[event_vec, blue!70!black] (edu1) -- node[midway, below left, font=\normalsize] {$e_{\text{College}}$} (edu2);
    \draw[event_vec, blue!70!black, dashed] (edu2) -- node[midway, above left, font=\normalsize] {$e_{\text{PhD}}$} (edu3);
    \node[state_dot,label=above right:{$h_{\text{PhD}}$}] at (edu3) {};

    \coordinate (car1) at (0.2, 1.5);
    \coordinate (car2) at (0.8, 3.2);
    \draw[event_vec, green!60!black] (O) -- node[midway, left, font=\normalsize] {$e_{\text{Apprentice}}$} (car1);
    \draw[event_vec, green!60!black] (car1) -- node[midway, left, font=\normalsize] {$e_{\text{Manager}}$} (car2);
    \node[state_dot,label=above:{$h_{\text{Manager}}$}] at (car2) {};

    \coordinate (trav1) at (1.2, 0.5);
    \coordinate (trav2) at (2.5, 2.5);
    \draw[event_vec, orange!80!black] (O) -- node[midway, below, font=\normalsize] {$e_{\text{Travel}}$} (trav1);
    \draw[event_vec, orange!80!black] (trav1) -- node[midway, below right, font=\normalsize] {$e_{\text{Abroad}}$} (trav2);
    \node[state_dot,label=right:{$h_{\text{Abroad}}$}] at (trav2) {};
    
\end{tikzpicture} &
\begin{tikzpicture}[
    font=\sffamily\large,
    state_dot/.style={circle, fill=black, inner sep=1pt},
    geodesic/.style={thick},
    disk/.style={draw=black!50, thick}
]
    \draw[disk] (0,0) circle (3cm);
    \node[state_dot, label=below:{$h_0$}] (O) at (0,0) {};

    \coordinate (edu1) at (1.0,1.0);
    \coordinate (edu2) at (1.8,1.5);
    \coordinate (edu3) at (2.4,1.8);
    \draw[geodesic, blue!70!black] (O) to[bend left=10] node[midway, below left] {\normalsize $e_{\text{School}}$} (edu1);
    \draw[geodesic, blue!70!black] (edu1) to[bend left=5] node[midway, below] {\normalsize $e_{\text{College}}$} (edu2);
    \draw[geodesic, blue!70!black, dashed] (edu2) to[bend left=0] node[midway, above] {\normalsize $e_{\text{PhD}}$} (edu3);
    \node[state_dot,label=above:{$h_{\text{PhD}}$}] at (edu3) {};

    \coordinate (car1) at (-1.2,1.0);
    \coordinate (car2) at (-2.0,1.5);
    \draw[geodesic, green!60!black] (O) to[bend right=10] node[midway, right] {\normalsize $e_{\text{Apprentice}}$} (car1);
    \draw[geodesic, green!60!black] (car1) to[bend right=5] node[midway, right] {\normalsize $e_{\text{Manager}}$} (car2);
    \node[state_dot,label=above:{$h_{\text{Manager}}$}] at (car2) {};

    \coordinate (trav1) at (0.8,-1.5);
    \coordinate (trav2) at (1.6,-2.0);
    \draw[geodesic, orange!80!black] (O) to[bend right=5] node[midway, above] {\normalsize $e_{\text{Travel}}$} (trav1);
    \draw[geodesic, orange!80!black] (trav1) to[bend right=0] node[midway, right] {\normalsize $e_{\text{Abroad}}$} (trav2);
    \node[state_dot,label=below:{$h_{\text{Abroad}}$}] at (trav2) {};
\end{tikzpicture}
\end{tabular}}
\caption{
        \textbf{Geometric Properties of \methodtitle{}~Embeddings.} The model learns to represent event sequences as trajectories in a vector space. 
        \textbf{(A)} Probable event sequences form trajectories where consecutive event vectors are directionally aligned. For example, two distinct life paths (`homeschool' $\rightarrow$ `career' and `startup' $\rightarrow$ `career') converge towards a similar state (`death') by adding event vectors that follow a logical progression. 
        \textbf{(B)} The additive structure enables analogical reasoning through vector arithmetic. The "formalization" vector learned from the `engagement` to `marriage` transition can be applied to `parenthood` to correctly identify the parallel concept of `adoption`. 
        \textbf{(C)} Euclidean Space: Unlike context-based models which cluster by semantic similarity, Event2Vec arranges events by sequential proximity. Distant points on the plot represent distant points in time, preserving the chronological trajectory.
        \textbf{(D)} Hyperbolic Space: With its exponential volume, it is better suited for capturing hierarchical branching, preventing the 'crowding' of distinct paths.}
\label{fig:teaser}
\end{figure}

\section{Related Work}

The concept of vector space embeddings for words, such as Word2Vec~\cite{mikolov2013distributed} and GloVe~\cite{pennington2014glove}, demonstrated the power of capturing semantic relationships through vector arithmetic. These models are trained on local co-occurrence statistics within a fixed context window. However, their reliance on local co-occurrence statistics within a fixed context window makes them unsuitable for modeling the long-range, ordered dependencies found in event sequences. In contrast, our model's recurrent architecture and novel reconstruction loss are designed to explicitly capture this directed, temporal structure, enforcing sequential compositionality over the entire event history.

Our work is related to methods for learning node representations in graphs, such as DeepWalk~\cite{perozzi2014deepwalk} and Node2Vec~\cite{grover2016node2vec}. These methods generate sequences of nodes through random walks on the graph and then use them to learn embeddings. 
However, these approaches treat the generated walks as unordered contexts. They are designed to capture neighborhood structure rather than the explicit, directed, and temporal order of events in a sequence, which is the primary focus of our model.

Recurrent Neural Networks (RNNs) and their variants, like Long Short-Term Memory (LSTM)~\cite{hochreiter1997long} and Gated Recurrent Units (GRUs)~\cite{cho2014learning}, are the standard for modeling sequential data. While powerful, their complex, non-linear dynamics, involving gating mechanisms and matrix multiplications, can lead to representations that are difficult to interpret. Other powerful sequence models like Neural Temporal Point Processes~\cite{shchur2021neural, mei2017neural,du2016recurrent} also learn embeddings from event histories, their complex, non-linear representations are optimized for temporal prediction and are not designed to support the kind of simple, interpretable vector arithmetic that our model explicitly enforces. 


Our hyperbolic prototype is directly inspired by recent advances showing that hyperbolic spaces are particularly effective for embedding hierarchical data. Their exponential volume growth naturally accommodates tree-like structures with low distortion, a property first leveraged for representation learning in Poincaré embeddings~\cite{nickel2017poincare} and later extended to models that operate within curved geometry~\cite{ganea2018hyperbolic}.

Our approach introduces a novel contribution by bridging the gap between purely sequential models and context-based embedding methods. 
While RNNs capture order and methods like Word2Vec capture context, our model is, to our knowledge, the first to explicitly unify both within a simple, geometrically interpretable framework. 
The core novelty lies in our training objective, which learns representations that are not just contextually similar but are also compositionally sequential. 
Specifically, our loss function encourages the model to place likely subsequent events linearly along a trajectory in the embedding space, while positioning semantically unrelated or unlikely events in orthogonal directions. This creates a much richer geometric structure where the vector displacement between events is directly tied to their sequential probability, a property not explicitly enforced by prior methods.

\section{The \methodtitle{}~Model}

We model a sequence of events $S = (s_1, s_2, ..., s_T)$, where each event $s_t$ is drawn from a vocabulary of event types. Our goal is to learn an embedding vector $e_i \in \mathbb{R}^d$ for each event type $i$, and a hidden state vector $h_t \in \mathbb{R}^d$ that represents the history of events up to time $t$.

\subsection{Model Architecture}

The core of the \methodtitle{} model is its additive state update mechanism, where the event embedding $e_{s_t}$ encodes a transition that is applied to the previous state $h_{t-1}$. We propose two geometric variants: a Euclidean model using standard vector addition, and a Hyperbolic model using Möbius addition, which is better suited for hierarchical data.

\paragraph{Euclidean Model}
In the Euclidean variant, the update rule is a pure vector addition. However, simple additivity can be numerically unstable, as the magnitude of the hidden state vector can grow without bound over long sequences. To ensure stable training, we employ a regularization technique. After the additive update, we clip the L2 norm of the resulting hidden state vector to a maximum value. This preserves the direction of the vector, and thus the additive semantics, while preventing its magnitude from causing training instabilities.
\begin{equation}
    h_t = h_{t-1} + e_{s_t} \quad (\text{with norm clipping})
    \label{eq:euclidean_update}
\end{equation}
The initial hidden state $h_0$ is a zero vector. The model predicts the next event $s_{t+1}$ via a linear decoder followed by a softmax function:
\begin{equation}
    P(s_{t+1} | h_t) = \text{softmax}(W_{dec} h_t + b_{dec})
\end{equation}

\paragraph{Hyperbolic Model}
For the hyperbolic variant, we operate in the Poincaré ball model from~\cite{ganea2018hyperbolic}, a conformal model of hyperbolic space that represents points within a unit ball. The additive update is replaced by Möbius addition $\oplus_c$, the natural generalization of addition to hyperbolic space with curvature $c < 0$. It is defined as:
\begin{equation}
    x \oplus_c y = \frac{(1 + 2c\langle x, y \rangle + c\|y\|^2)x + (1 - c\|x\|^2)y}{1 + 2c\langle x, y \rangle + c^2\|x\|^2\|y\|^2}
\end{equation}
This operation ensures that the addition of two vectors within the Poincaré ball results in another vector that also lies within the ball.\footnote{We note that while Euclidean addition is commutative, Möbius addition defines a Gyrovector space which is gyro-commutative and non-associative. However, our step-wise reconstruction loss effectively enforces the local inverse property required for compositionality.} The update rule is:
\begin{equation}
    h_t = h_{t-1} \oplus_c e_{s_t}
\end{equation}
To make a prediction, we must interface the hyperbolic representation with a standard Euclidean classifier. We do this by mapping the hyperbolic state $h_t$ to the tangent space at the origin (a Euclidean space) via the logarithmic map, $\log_{0,c}(\cdot)$, before applying the linear transformation. This is a standard and principled approach for decoding from hyperbolic representations.

\subsection{Training Objective}

The model is trained to minimize a composite loss function $\mathcal{L}_{total}$, which consists of three components designed to enforce our desired geometric properties:

\begin{enumerate}
    \item \textbf{Prediction Loss ($\mathcal{L}_{pred}$):} A standard cross-entropy loss for predicting the next event in the sequence. This ensures the model learns to capture the sequential dependencies in the data.
    \begin{equation}
        \mathcal{L}_{pred} = -\sum_{t=1}^{T-1} \log P(s_{t+1} | h_t)
        \label{eq:Lpred}
    \end{equation}

    \item \textbf{Reconstruction Loss ($\mathcal{L}_{recon}$):} This loss is central to our linear additive hypothesis. It ensures that the event embedding can be "subtracted" from the hidden state to perfectly recover the previous state. This explicitly enforces the algebraic group structure of addition and its inverse.
    \begin{equation}
        \mathcal{L}_{recon} = \sum_{t=1}^{T} \| (h_t - e_{s_t}) - h_{t-1} \|^2_2
        \label{eq:Lrecon}
    \end{equation}
    For the hyperbolic case, this becomes $\mathcal{L}_{recon} = \sum_t d_c(h_t \oplus_c (-e_{s_t}), h_{t-1})^2$, where $d_c$ is the Poincaré distance.

    \item \textbf{Consistency Loss ($\mathcal{L}_{consist}$):} To ensure robustness, we use dropout as a stochastic perturbation~\cite{gao2021simcse}, penalizing differences between hidden states from two forward passes of the same input: $\mathcal{L}_{consist} = \sum_{t=1}^{T} \| h_t^{(1)} - h_t^{(2)} \|^2_2$, where $h_t^{(1)}$ and $h_t^{(2)}$ correspond to different dropout masks. 
\end{enumerate}

The total loss is a weighted sum of these components:
\begin{equation}
    \mathcal{L}_{total} = \mathcal{L}_{pred} + \lambda_{recon}\mathcal{L}_{recon} + \lambda_{consist}\mathcal{L}_{consist}
\end{equation}

\section{Theoretical Justification for Additivity}

Here, we justify that for the Euclidean \methodtitle{} model, minimizing our composite loss function forces the recurrent update to converge to an ideal additive form. This provides a theoretical foundation for our claim that the model learns composable representations.

\begin{theorem}[Justification for Ideal Additivity]
Let the hidden state update be defined as $h_t = f(h_{t-1}, e_{s_t})$, where $f$ is a function parameterized by the neural network. Minimizing the reconstruction loss $\mathcal{L}_{recon} = \sum_{t} \| f(h_t, -e_{s_t}) - h_{t-1} \|^2_2$ with respect to the parameters of $f$ drives $f$ to approximate the linear additive function $h_t = h_{t-1} + e_{s_t}$.
\end{theorem}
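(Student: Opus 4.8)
The plan is to locate the global minimizers of $\mathcal{L}_{recon}$, show that the ideal additive update is one of them, and then use regularity (together with the companion losses) to argue that minimization actually drives $f$ toward it. The first step is immediate: $\mathcal{L}_{recon}$ is a sum of squared norms, so $\mathcal{L}_{recon}\ge 0$, and substituting $f(h,e)=h+e$ makes the forward pass produce the partial sums $h_t=h_{t-1}+e_{s_t}$, so every residual $f(h_t,-e_{s_t})-h_{t-1}=(h_t-e_{s_t})-h_{t-1}$ vanishes identically; the linear additive map therefore attains the infimum $\mathcal{L}_{recon}=0$. One can further check that this map is a critical point of $\mathcal{L}_{recon}$ in parameter space with positive-semidefinite Hessian, so it is a genuine global minimum rather than merely a feasible point.

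The substantive step is to read off what $\mathcal{L}_{recon}=0$ means structurally. Vanishing of all residuals is equivalent to the functional identity $f(f(h,e),-e)=h$ for every hidden state $h$ and event embedding $e$ visited during training --- i.e. the event map $h\mapsto f(h,e)$ is invertible with inverse $h\mapsto f(h,-e)$, exactly the ``group-with-inverse'' structure the loss was designed to impose. Writing $f(h,e)=h+g(h,e)$ with the mild normalization $f(h,0)=h$, this reads $g(h+g(h,e),-e)=-g(h,e)$; the distinguished smooth family of solutions is $g(h,e)=\psi(e)$ with $\psi$ odd, for which the identity holds exactly, and absorbing $\psi$ into the jointly-learned embeddings via $e_i\mapsto\psi(e_i)$ reproduces $h_t=h_{t-1}+e_{s_t}$ verbatim. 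To show the training singles this out I would combine (i) an order-by-order Taylor analysis of the identity near $e=0$, which forces the leading terms of $g$ to be odd in $e$ and constrains its $h$-dependence, (ii) smoothness of the network map and the contractive effect of the norm clipping on the reachable states, and (iii) the observation that any residual $h$-dependent or even-in-$e$ component of $g$ is charged for by the squared penalty, so gradient descent decreases it. The clause ``approximate'' then absorbs the gap between this idealized limit and what is reachable with finite capacity, data, and optimization budget.

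The main obstacle --- and the reason the statement reads ``drives $f$ to approximate'' rather than ``forces $f$ to equal'' --- is that the zero-loss set is strictly larger than $\{h\mapsto h+e\}$: for any involution $\sigma$ of $\mathbb{R}^d$ the choice $f(h,e)=\sigma(\sigma(h)+e)$ also satisfies $f(f(h,e),-e)=h$, and more generally $f(h,e)=h+Be$ does so for every matrix $B$. For linear $\sigma$ (resp.\ invertible $B$) these are harmless changes of basis of the embedding space, so they encode the \emph{same} composable structure and the honest conclusion is ``additive up to a linear reparametrization''; but for nonlinear $\sigma$ one gets genuinely non-additive minimizers. Eliminating these spurious solutions rigorously is the crux, and I expect it to require either strengthening the argument with $\mathcal{L}_{pred}$ (which pins the embeddings and decoder) and $\mathcal{L}_{consist}$ (which pins the state distribution), or invoking an implicit-bias / minimum-complexity property of gradient descent that selects the simplest --- linear --- representative of the minimizing family.
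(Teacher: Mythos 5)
Your proposal follows the same spine as the paper's proof: observe that $\mathcal{L}_{recon}=0$ forces every residual to vanish, substitute the forward update to obtain the nested reversibility identity $f(f(h,e),-e)=h$, note that $f(h,e)=h+e$ satisfies it, and absorb any remaining reparametrization into the jointly-learned embeddings (the paper does this only for a scalar factor $c$; you do it for the general substitution $e\mapsto\psi(e)$ and for matrices $B$). Where you genuinely go beyond the paper is in honestly characterizing the zero-loss set: the paper simply asserts that linear addition is ``the simplest function satisfying this property'' and stops, whereas you exhibit concrete non-additive global minimizers, e.g.\ $f(h,e)=\sigma(\sigma(h)+e)$ for a nonlinear involution $\sigma$, which the reversibility identity alone cannot exclude. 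This is not a defect of your argument but of the theorem as stated: the reconstruction loss by itself only enforces that each event map is invertible with inverse given by the negated embedding, and additivity is selected (if at all) by the interaction with $\mathcal{L}_{pred}$, the linear decoder, the architecture's parametrization of $f$, and the implicit bias of the optimizer --- exactly the escape routes you list but do not (and arguably cannot, from the loss alone) close. Your version is therefore the more defensible one: it proves what the loss actually proves (reversibility, and additivity up to reparametrization \emph{within} the additive ansatz) and explicitly names the step that remains heuristic, which the paper hides behind the word ``simplest.''
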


\begin{proof}
The reconstruction loss $\mathcal{L}_{recon}$ reaches its minimum of zero only when its argument is zero for all steps. This imposes the constraint $f(h_t, -e_{s_t}) = h_{t-1}$. By substituting the definition of the update step $h_t = f(h_{t-1}, e_{s_t})$ into this constraint, we obtain the nested identity:
$$
f(f(h_{t-1}, e_{s_t}), -e_{s_t}) = h_{t-1}
$$
This equation dictates that the update operation must be strictly reversible: the effect of applying an event embedding ($e_{s_t}$) is perfectly undone by applying its inverse ($-e_{s_t}$). The simplest function satisfying this property in a vector space is linear addition, $f(h, e) = h + e$.

To address the possibility of scalar multiplicity, consider a candidate linear function $h_t = h_{t-1} + c \cdot e_{s_t}$, where $c$ is a scalar scaling factor. Substituting this into the nested identity yields:
$$
(h_{t-1} + c \cdot e_{s_t}) + c \cdot (-e_{s_t}) = h_{t-1} \implies h_{t-1} = h_{t-1}
$$
This equality holds true for any non-zero scalar $c$. However, the event embedding $e_{s_t}$ is itself a learned parameter. Due to the scale invariance of the linear term, any solution with $c \neq 1$ can be equivalently re-parameterized as a solution with $c=1$ by absorbing the scalar into the embedding magnitude (i.e., defining a new effective embedding $e'_{s_t} = c \cdot e_{s_t}$). Therefore, without loss of generality, we adopt the canonical form $c=1$.
\end{proof}

It is important to note that while the reconstruction loss enforces this reversible structure, it does not by itself guarantee a meaningful solution. For instance, a trivial solution where all event embeddings are zero ($e_t = \mathbf{0}$) would perfectly satisfy the reversibility constraint. This outcome is prevented by the prediction loss, which compels the model to learn informative, non-zero embeddings to make accurate forecasts about the sequence. While the reconstruction loss enforces the algebraic structure of additivity, the prediction loss imposes the semantic grounding for the embeddings. The two work in tandem.

\begin{theorem}[Semantic Grounding via Prediction Loss]
The prediction loss $\mathcal{L}_{pred}$ encourages the inner product of a hidden state $h_t$ and a subsequent event embedding $e_{s_{t+1}}$ to be proportional to their pointwise mutual information (PMI).
\end{theorem}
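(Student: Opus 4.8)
\section*{Proof proposal}

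The plan is to adapt the implicit-matrix-factorization argument of Levy and Goldberg (skip-gram $\approx$ shifted-PMI factorization) to our recurrent softmax decoder. First I would reorganize the empirical prediction loss \eqref{eq:Lpred} by grouping the prediction steps according to the pair $(c,w)$, where $c$ denotes a history (equivalently, the hidden state $h_c := h_t$ it induces) and $w$ the next event $s_{t+1}$. Writing $\#(c,w)$ for the number of occurrences and $N_c = \sum_w \#(c,w)$, the loss becomes $\mathcal{L}_{pred} = -\sum_c \sum_w \#(c,w)\,\log P(w \mid h_c)$, with $P(w \mid h_c) = \exp(\langle \omega_w, h_c\rangle + b_w)/Z_c$, where $\omega_w$ is the $w$-th row of $W_{dec}$, $b_w$ the corresponding bias, and $Z_c$ the partition function.

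Next I would analyze the minimizer. Treating the logits $z_{c,w} = \langle \omega_w, h_c\rangle + b_w$ as free for the moment (the over-parameterized regime), the per-context term is a cross-entropy between the empirical conditional $\hat p(w \mid c) = \#(c,w)/N_c$ and $P(\cdot \mid h_c)$, which is minimized exactly when $P(w \mid h_c) = \hat p(w \mid c)$, i.e.\@ $z_{c,w} = \log \hat p(w \mid c) + \alpha_c$ for a constant $\alpha_c$ (the log-partition) depending only on $c$. I would then introduce the marginal via the identity $\log \hat p(w \mid c) = \mathrm{PMI}(c,w) + \log \hat p(w)$, giving $\langle \omega_w, h_c\rangle + b_w = \mathrm{PMI}(c,w) + \log \hat p(w) + \alpha_c$. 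The bias $b_w$ naturally settles at $\log \hat p(w)$ (it is the only $w$-dependent, $c$-independent free parameter), and the per-context offset $\alpha_c$ can be made to vanish by the standard augmentation trick (append a coordinate equal to $1$ to every $\omega_w$ and a coordinate equal to $\alpha_c$ to each $h_c$), leaving $\langle \omega_w, h_c\rangle = \mathrm{PMI}(c,w)$ at the optimum — the claimed proportionality. The last bridge is from the decoder rows $\omega_w$ to the event embeddings $e_w$: under the weight tying that the additive architecture makes natural (and that $\mathcal{L}_{recon}$ already pushes toward, since the same $e_w$ must be both added and subtracted along a trajectory), $\omega_w = e_w$, so $\langle h_t, e_{s_{t+1}}\rangle \approx \mathrm{PMI}(h_t, s_{t+1})$.

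I expect the main obstacle — and the point at which the statement must be read as approximate rather than exact — to be the capacity assumption: the logits are not actually free but constrained to rank at most $d$, and, worse, $h_c$ is not arbitrary but the clipped vector (or Möbius) sum of its constituent event embeddings. Consequently the honest conclusion is that the $d$-dimensional embeddings realize the best low-rank approximation to the (bias-shifted) PMI matrix, with the inner product tracking PMI exactly only in the limit of sufficient dimension and expressive histories. I would therefore state the theorem in that regime, sketch the argument above for the unconstrained optimum, and defer the approximation-error bound to the low-rank-factorization literature rather than grinding it out here.
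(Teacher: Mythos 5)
Your proposal is correct in spirit but takes a genuinely different route from the paper. The paper's proof is an argument by analogy: it invokes the Levy--Goldberg result that Skip-Gram with Negative Sampling implicitly factorizes the shifted PMI matrix ($\vec{w}^T\vec{c} = \mathrm{PMI}(w,c) - \log k$), identifies the next event as the ``word'' and $h_t$ as the ``context,'' and then uses the additive hypothesis $h_t = \sum_i e_{s_i}$ to distribute the inner product over the constituent events. You instead analyze the model's actual objective --- the full softmax cross-entropy --- directly: the unconstrained minimizer matches the empirical conditional, the logits recover $\log \hat p(w \mid c)$ up to a per-context constant, and splitting off the unigram log-probability into the bias yields $\langle \omega_w, h_c\rangle = \mathrm{PMI}(c,w)$. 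This is arguably more faithful to the model as defined (the paper's $-\log k$ shift is an artifact of negative sampling, which the model does not use), and your explicit acknowledgment of the rank-$d$ constraint and of the fact that $h_c$ is itself constrained to be a (clipped) sum of embeddings is more honest than the paper's sketch. The one step where you are more optimistic than warranted is the bridge from decoder rows $\omega_w$ to event embeddings $e_w$: nothing in $\mathcal{L}_{recon}$ actually pushes toward weight tying (it constrains only the recurrent update, not the decoder), so $\omega_w = e_w$ is an additional architectural assumption rather than a consequence of the losses. However, the paper's own proof silently makes the same identification --- it never mentions $W_{dec}$ at all --- so this is a shared gap, not a defect unique to your argument.
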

\begin{proof}[Sketch]
The $\mathcal{L}_{pred}$ provides the semantic grounding for the embeddings by linking them to the statistical properties of the event sequences. The objective is analogous to the Skip-Gram with Negative-Sampling (SGNS) framework~\cite{levy2014neural}, which has been shown to implicitly factorize a word-context matrix where each cell is the shifted Pointwise Mutual Information (PMI). The optimal solution for the dot product of a word vector and its context vector under the SGNS objective is $\vec{w}^T\vec{c} = \text{PMI}(w,c) - \log k$.

By analogy, in our model, the next event $s_{t+1}$ acts as the "word" and the history vector $h_t$ acts as the "context." Therefore, at the optimum, the model learns embeddings such that $h_t^T e_{s_{t+1}} \approx \text{PMI}(s_{t+1}, h_t) - \log k$. A crucial feature of our model is the linear additive hypothesis, where the history vector is the sum of its constituent event embeddings $h_t = \sum_{i=1}^{t}e_{s_{i}}$ (additive hypothesis). Substituting this into the relationship yields:
$$h_t^T e_{s_{t+1}} = \underbrace{
\left(\sum_{i=1}^{t} e_{s_i}\right)^T e_{s_{t+1}}
}_\text{Additive hypothesis} = 
\underbrace{
\sum_{i=1}^{t} e_{s_i}^T e_{s_{t+1}}
}_\text{Distributive property} \approx
\underbrace{
\text{PMI}(s_{t+1}, \{s_1, \dots, s_t\})
}_{\text{by analogy to the SGNS objective}}$$
This result demonstrates that the prediction loss $\mathcal{L}_{pred}$ forces the embeddings to be geometrically arranged such that events statistically likely to follow a sequence are directionally aligned with that sequence's vector. This grounds the learned geometry in the statistical reality of the data.
\end{proof}
The prediction and reconstruction losses work in synergy to create a space that is both semantically meaningful and structurally coherent. The prediction loss first provides the semantic grounding, arranging the space by ensuring that embeddings for statistically likely event sequences (e.g., `birth` then `infancy`) are geometrically close. 
However, this proximity alone does not enforce a specific algebraic rule. 
The reconstruction loss then acts as a structural constraint, taking this semantically organized space and "cementing" it with the rules of vector arithmetic. By forcing the event update to be perfectly reversible, $\mathcal{L}_{recon}$ ensures that the representation of a sequence is the precise vector sum of its parts (e.g., $h(\text{`birth`},\text{`infancy`}=e_{birth}+e_{infancy}$), making the space interpretably compositional.

\begin{theorem}[Hyperbolic Equivalence]
In the Poincaré ball, minimizing the hyperbolic reconstruction loss $\mathcal{L}_{recon} = \sum_t d_c(h_t \oplus_c (-e_{s_t}), h_{t-1})^2$ forces the update rule to be Möbius addition.
\end{theorem}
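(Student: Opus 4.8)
The plan is to mirror the structure of the Euclidean additivity theorem, but with the group $(\mathbb{R}^d,+)$ replaced by the gyrocommutative gyrogroup $(\mathbb{B}^d_c,\oplus_c)$. First I would observe that, since the Poincaré distance $d_c$ is a genuine metric, the hyperbolic reconstruction loss $\mathcal{L}_{recon}=\sum_t d_c\bigl(f(h_t,-e_{s_t}),h_{t-1}\bigr)^2$ is nonnegative and attains its minimum of zero exactly when $f(h_t,-e_{s_t})=h_{t-1}$ for every step. Substituting the update definition $h_t=f(h_{t-1},e_{s_t})$ gives the nested reversibility identity $f\bigl(f(h_{t-1},e_{s_t}),-e_{s_t}\bigr)=h_{t-1}$, the exact analogue of the Euclidean case. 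The task is then to argue that $f=\oplus_c$ is the natural (canonical) solution.

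The key steps, in order, are: (i) recall from the gyrogroup axioms that Möbius addition satisfies the \emph{left cancellation law}, $(-x)\oplus_c(x\oplus_c y)=y$, which is precisely the statement that $f(\cdot,-x)$ inverts $f(\cdot,x)$ on the left — so $\oplus_c$ does satisfy the reversibility identity and is a valid solution; (ii) note that, as the paper's own footnote emphasizes, the step-wise loss only constrains the \emph{local inverse property}, i.e. it pins down $f$ along the trajectory $h_{t-1}\mapsto h_t$ rather than globally, so what we are really establishing is that among update rules compatible with the Poincaré-ball geometry, the reversible one is Möbius addition; (iii) invoke the uniqueness/canonicity argument: any candidate of the form $h_t=h_{t-1}\oplus_c(c'\cdot_c e_{s_t})$, where $\cdot_c$ is the Möbius scalar multiplication, trivially satisfies the identity by left cancellation for \emph{any} nonzero scalar $c'$, exactly as in the Euclidean proof, and hence the scalar can be absorbed into the learned embedding $e'_{s_t}=c'\cdot_c e_{s_t}$, so without loss of generality $c'=1$; (iv) appeal to the prediction loss (Theorem on semantic grounding) to rule out the degenerate solution $e_{s_t}=\mathbf{0}$, which satisfies reversibility trivially but carries no information. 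Finally I would remark that the closure property — $h_{t-1}\oplus_c e_{s_t}\in\mathbb{B}^d_c$ — is automatic and is in fact why one must use $\oplus_c$ rather than naive vector addition in the ball.

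The main obstacle is that, unlike the Euclidean case, $(\mathbb{B}^d_c,\oplus_c)$ is \emph{non-associative} and only \emph{gyrocommutative}, so the slogan "the simplest reversible operation is addition" is weaker here: reversibility alone does not single out Möbius addition among all gyrogroup-like operations on the ball, and one genuinely needs to lean on the conformal Riemannian structure of the Poincaré ball (geodesics, the exponential map at a point) to say that $\oplus_c$ is \emph{the} geometrically natural reversible update — the one whose inverse is realized by the embedding $-e_{s_t}$ and whose action moves $h_{t-1}$ along the geodesic determined by $e_{s_t}$ via $\exp$ at the origin transported appropriately. I would handle this by framing the statement as: once we commit to working in the Poincaré ball with its intrinsic geometry (which the architecture does), the reversibility constraint forces the update to coincide with Möbius addition, since Möbius addition is characterized as the unique binary operation on $\mathbb{B}^d_c$ compatible with the Möbius transformations of the ball that fix the origin-to-point geodesics and admits $-e$ as the inverse transition for $e$. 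I would keep this at the level of a principled argument rather than a full characterization theorem, consistent with the "Sketch"-level rigor used for Theorem 2 in the paper.
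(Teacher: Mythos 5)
Your proposal follows essentially the same route as the paper's proof: argue that the metric $d_c$ vanishes only when its arguments coincide, derive the nested reversibility identity $f_c(f_c(h_{t-1},e_{s_t}),-e_{s_t})=h_{t-1}$, assert that Möbius addition is the operation satisfying it, and (going beyond the paper) absorb any scalar ambiguity into the learned embeddings and rule out the degenerate zero-embedding solution via the prediction loss. The extra structure you add --- the canonicity step via Möbius scalar multiplication, the closure remark, and the explicit acknowledgement that reversibility alone does not characterize $\oplus_c$ among all operations on the ball --- is a genuine improvement in care over the paper's two-line argument.

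One technical point deserves flagging, and it affects the paper's proof as much as yours: the cancellation law you cite, $(-x)\oplus_c(x\oplus_c y)=y$, is the \emph{left} cancellation law, i.e.\ it inverts the map $y\mapsto x\oplus_c y$ in which the event is added on the \emph{left}. The model's update as defined is $h_t=h_{t-1}\oplus_c e_{s_t}$ with the event on the \emph{right}, and the reconstruction term computes $(h_{t-1}\oplus_c e)\oplus_c(-e)$. Because Möbius addition is non-associative, this right-sided expression does \emph{not} equal $h_{t-1}$ in general; the correct right-sided identity carries a gyration, $(a\oplus_c b)\oplus_c\mathrm{gyr}[a,b](\ominus b)=a$. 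So strictly speaking the hyperbolic reconstruction loss is not exactly zero at the Möbius update, and the claimed minimizer argument needs either a left-sided update convention, a gyration-corrected inverse in the loss, or an explicit appeal to the loss being small (rather than zero) when gyrations are near the identity. The paper hides this behind its footnote about the ``local inverse property''; you correctly sense the obstacle in your closing paragraph but then resolve it by citing the one cancellation law that does not apply to the update rule as written. Making the left/right convention consistent would close the gap in both your argument and the paper's.
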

\begin{proof}
The proof follows the same logic as the Euclidean case, but with hyperbolic operations. The distance $d_c(x, y)$ is minimized when $x=y$. Thus, the loss is minimized when $h_t \oplus_c (-e_{s_t}) = h_{t-1}$. In hyperbolic geometry, the inverse of Möbius addition with a vector $e$ is Möbius addition with its negative, $-e$. Let the update rule be $h_t = f_c(h_{t-1}, e_{s_t})$. The loss minimum requires $f_c(f_c(h_{t-1}, e_{s_t}), -e_{s_t}) = h_{t-1}$. The function that satisfies this property in the Poincaré ball is Möbius addition, $f_c(h, e) = h \oplus_c e$. Therefore, minimizing the hyperbolic reconstruction loss forces the model to learn the correct additive compositional rule for that geometry.
\end{proof}

\section{Experiments}

To validate our model and its theoretical underpinnings, we propose the following experiments designed to rigorously test our claims and demonstrate the utility of our approach. In all experiments, the loss weights $\lambda$ were set to 1.

\subsection{Life Path Example}

To demonstrate the practical utility and interpretability of \methodtitle{}, we apply it to a synthetic dataset modeling a human life path. This dataset consists of sequences of significant life events, from birth and education to career milestones and retirement. 
The goal of this experiment is to qualitatively assess whether our model can learn a geometrically coherent representation of a chronological trajectory, a task that requires capturing not just the semantic meaning of events but also their inherent temporal order. Appendix explains the entire scenario. 

\begin{table}[h!]
\centering
\resizebox{\linewidth}{!}{%
\begin{tabular}{|c|ccccccc|c|}
\hline
\textbf{ID} & \multicolumn{5}{c}{\textbf{Analogy Query} ($A - B + C$)} &  & \textbf{Result} ($D$) & Cosine sim.\\
\hline
\hline
1A & `elementary\_school` & - &  `birth` & + & `first\_job` & = & `late\_childhood` & 0.4504\\
1B & `death` & - & `retirement` & + & `graduation`  & = & `internship` & 0.4090\\
2A & `promotion`& - & `career\_start` & + & `first\_job`  & = & `military\_service` & 0.5048\\
2B & `business\_success` & - & 'entrepreneurship' & + & 'investment'  & = & `inheritance`  & 0.5976\\
3A & `marriage` & - & 'engagement` & + & `parenthood`  & = & `adoption` & 0.4653 \\
3B & `divorce` & - & `relationship\_challenge` & + & `financial\_hardship`  & = & `major\_purchase` & 0.3869\\
\hline
\end{tabular}}
\caption{Analogical reasoning examples demonstrating that vector arithmetic ($A - B + C$) on \methodtitle{}~embeddings captures coherent semantic relationships between life events.}
\label{tab:analogy}
\end{table}


\paragraph{Geometric Analysis of Embedding} 
The model's learned geometry, shown in \fig{fig:embedding}, reveals distinct life trajectories. A primary path flows from an initialization phase of education (`birth`, `study\_abroad`) into a dense central hub representing a conventional career and family life (`marriage`, `leadership\_role`), with logical offshoots for events like `divorce` and `grandparenthood`. The visualization also captures alternative paths, such as a specialized academic sequence and individualistic pursuits (`travel`, `volunteer\_work`). Critically, the event `death` is not a single endpoint but is located centrally, reflecting its status as a shared outcome. However, this static visualization does not convey the full power of \methodtitle{}; its compositional additivity, which enables arithmetic reasoning on trajectories, is more robustly validated by the quantitative results in \textbf{\fig{fig:additivity} (Appendix)} and the Brown Corpus experiment in the following section.

\begin{figure}[h!]
    \centering
    \resizebox{1.0\linewidth}{!}{
    \begin{tabular}{ccc}
        {\small Euclidean \methodtitle{}} & {\small Hyperbolic \methodtitle{}} & {\small Word2Vec}\\
         \includegraphics[width=0.3\linewidth]{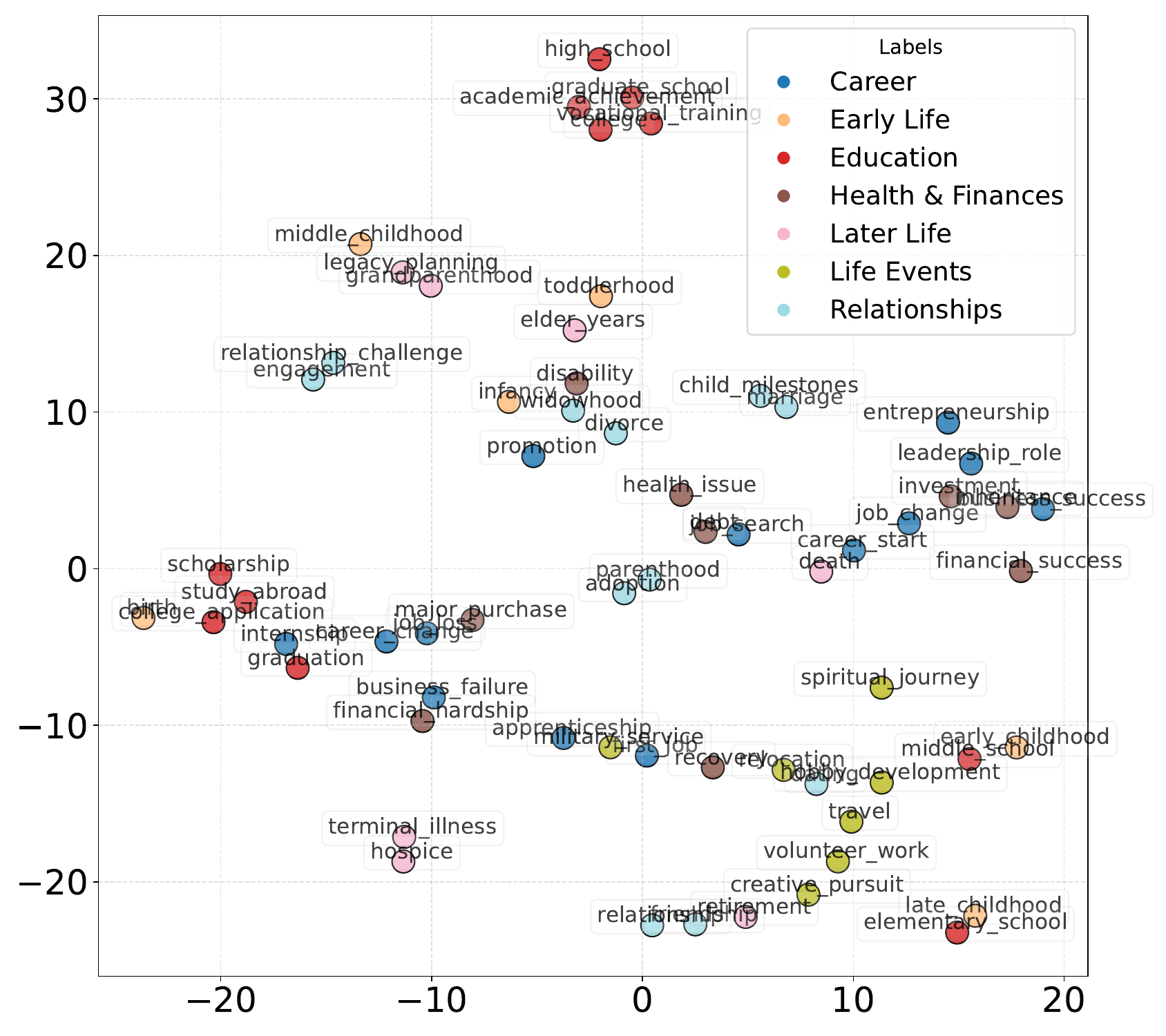} &
         \includegraphics[width=0.3\linewidth]{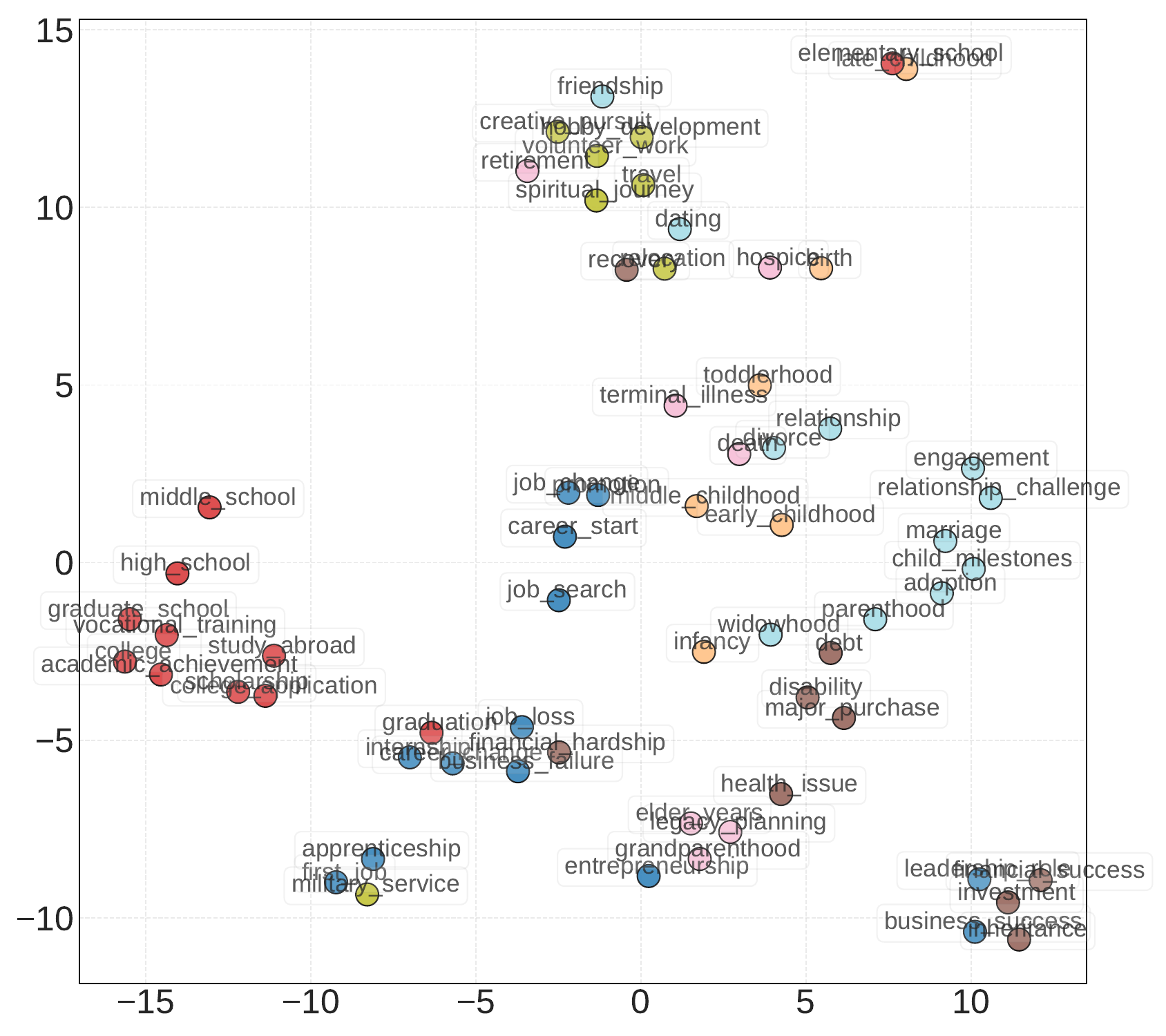} &
         \includegraphics[width=0.3\linewidth]{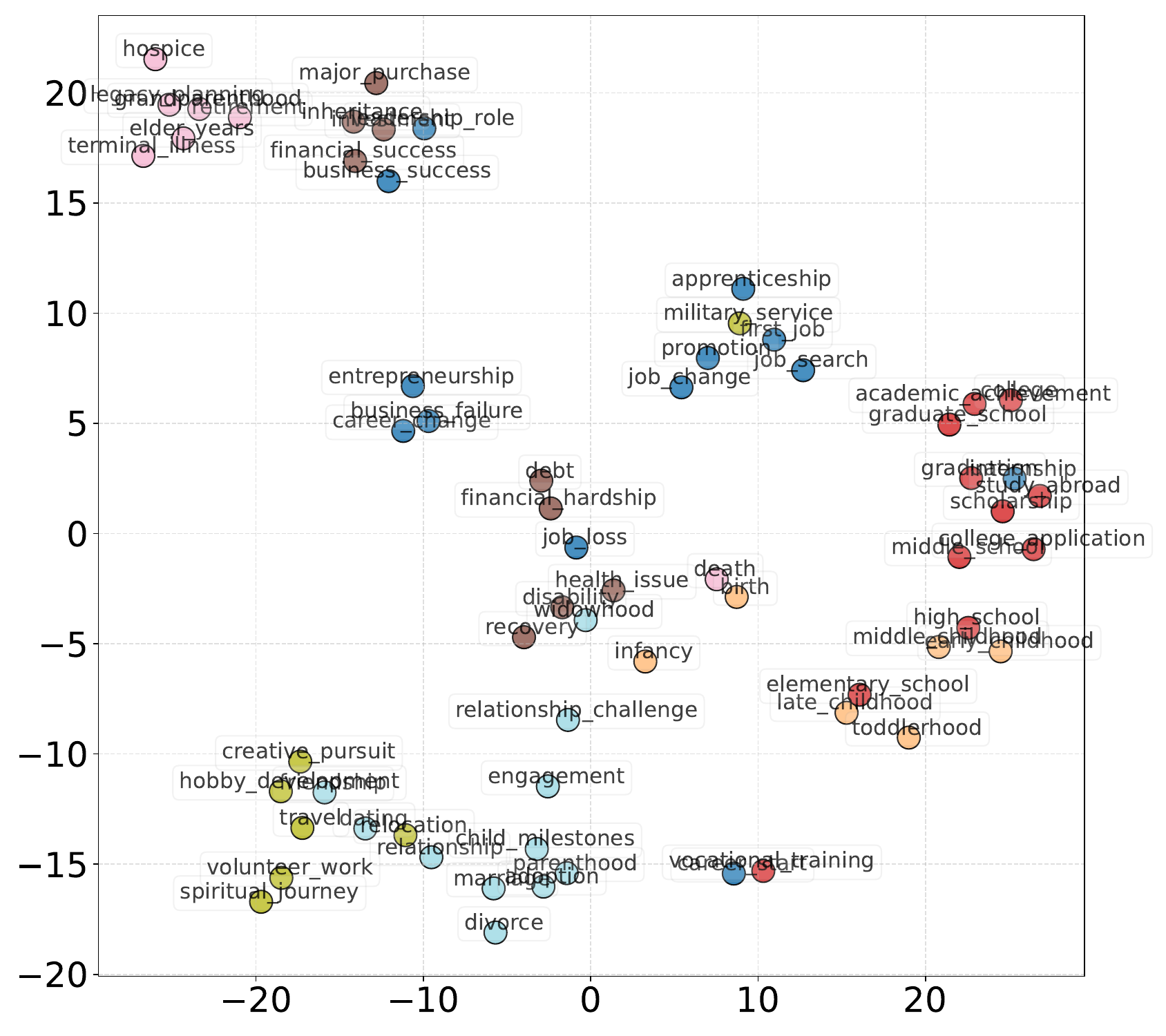}
         
    \end{tabular}}
    \caption{A t-SNE comparison of life path embeddings. \textbf{(Left)} The Euclidean \methodtitle{} model learns a clear chronological trajectory. 
    \textbf{(Center)} The Hyperbolic \methodtitle{} model captures a more powerful hierarchical structure, with life paths branching radially from the `birth` event. 
    \textbf{(Right)} The Word2Vec baseline learns thematic clusters, grouping events by semantic context (e.g., `birth` and `death`) while failing to capture sequential or hierarchical relationships.}
    \label{fig:embedding}
\end{figure}

\paragraph{Comparison of Euclidean vs. Hyperbolic Space:}
We trained a hyperbolic variant of \methodtitle{} to better model the inherently hierarchical nature of life paths, leveraging a geometry designed for tree-like data. The resulting embedding space, visualized using Poincaré distances, suggests a branching structure where life events appear to separate into domains like Education, Relationships, and Career. We hypothesize that this structure may be more effective at representing the divergent nature of life choices than a standard Euclidean trajectory, while still preserving the temporal structure. 

\subsection{Brown Corpus}

\label{exp:brown}
To validate our model's ability to capture linguistic structure, we conducted an unsupervised experiment on the Brown Corpus~\cite{bird2009natural}. We trained \methodtitle{} and a Word2Vec baseline on raw sentences without any grammatical labels. Subsequently, we evaluated the models by composing vector representations for specific Part-of-Speech (POS) tag sequences (e.g., `AT-JJ-NN` for article-adjective-noun) by summing the learned embeddings of corresponding words. This directly tests if the model's additive property learns meaningful syntactic composition.

The results, visualized in \fig{fig:embedding_brown}, show that \methodtitle{} successfully organizes these composite vectors into distinct clusters corresponding to their grammatical structure. Notably, it groups similar patterns like `AT-JJ-NN` and `IN-AT-NN`, a separation that is far less distinct in the Word2Vec baseline. This visual finding is quantitatively supported by a \textbf{silhouette score of 0.0564}, more than double the \textbf{0.0215} achieved by Word2Vec. This demonstrates that \methodtitle{} can discover and represent grammatical regularities from raw text without any explicit supervision.

\begin{figure}[h!]
\centering
\resizebox{1.0\linewidth}{!}{
\begin{tabular}{cc}
\includegraphics[width=0.5\linewidth]{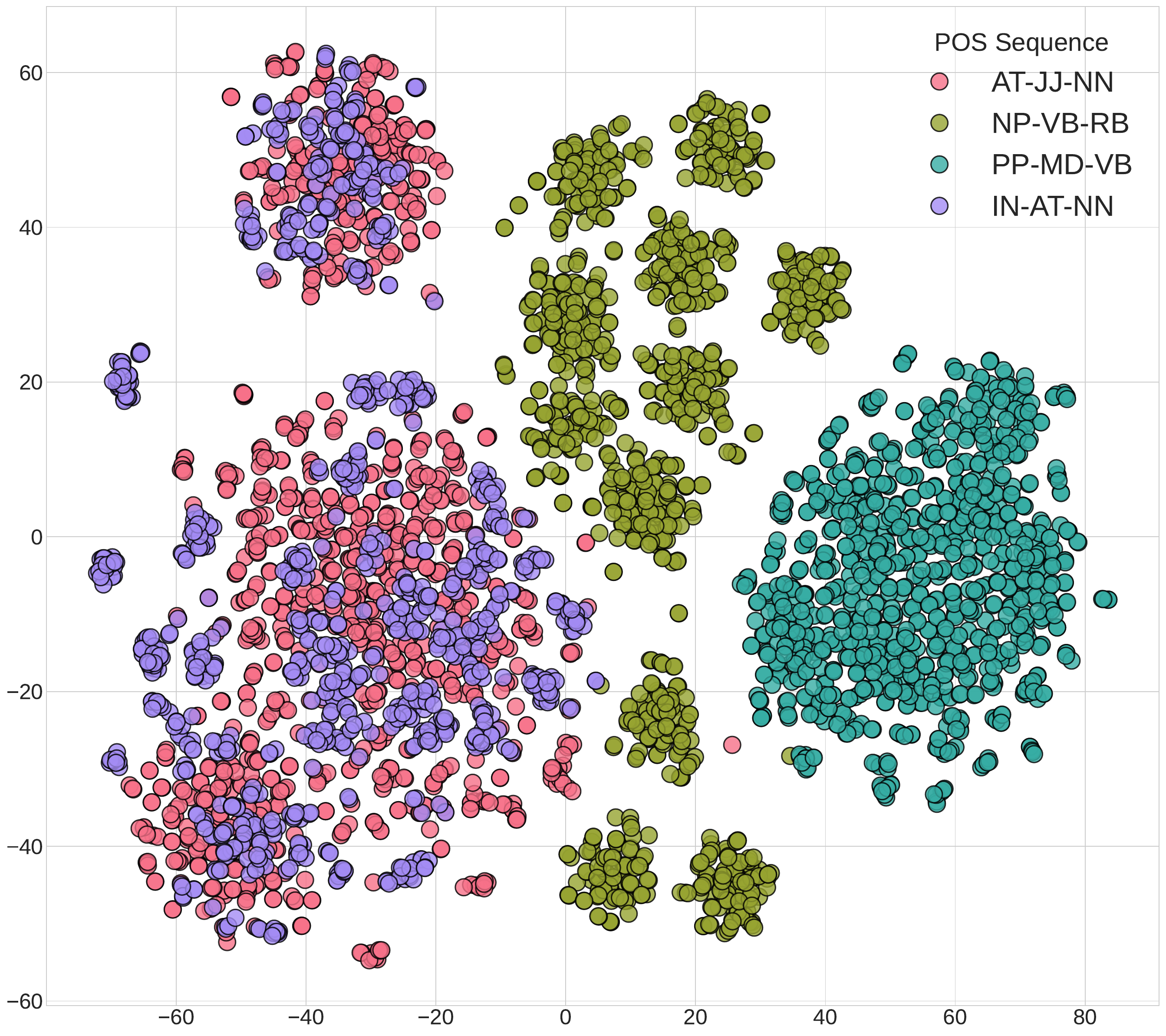} & \includegraphics[width=0.5\linewidth]{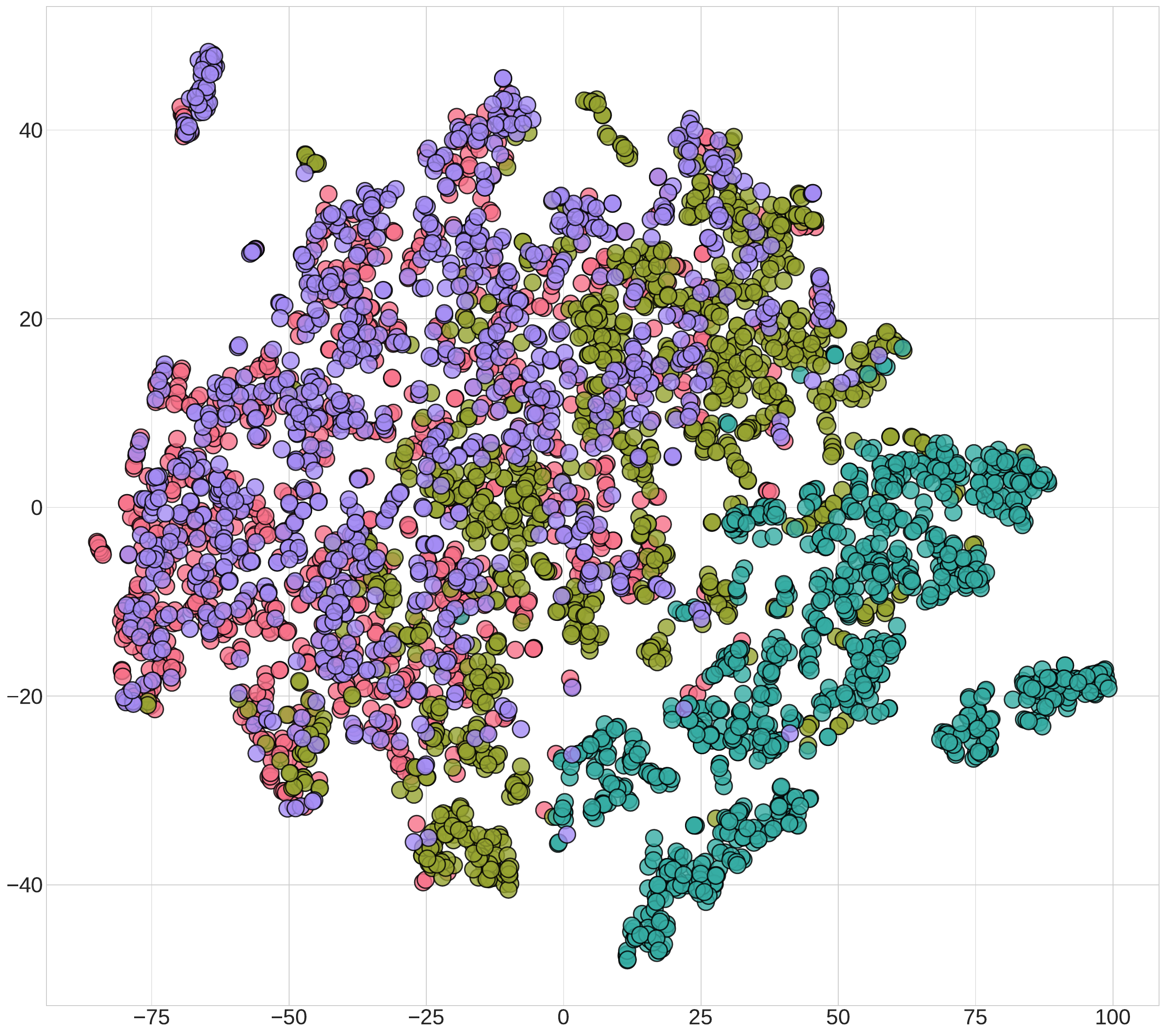}
\end{tabular}}
\caption{t-SNE visualization of embedded POS tag sequences from the Brown Corpus. \methodtitle{} (left) produces well-defined clusters corresponding to grammatical structures, correctly grouping similar sequences like `AT-JJ-NN` and `IN-AT-NN`. The Word2Vec baseline (right) shows significantly weaker separation and overlapping clusters.}
\label{fig:embedding_brown}
\end{figure}

\section{Limitations}

The model's deliberate simplicity and reliance on a additive structure introduce two key limitations. 
First, the pure additive mechanism is numerically unstable on long sequences, as the hidden state vector can grow without bound (see \fig{fig:additivity}). This necessitates regularization techniques that, in turn, compromise the model's perfect additivity. 
Second, the additive prior is a conceptual constraint, as it cannot capture complex, non-linear interactions. For instance, while a real-world `market\_crash` acts as a multiplicative operator that scales an `investment` proportional to its magnitude, our model is forced to represent it as a fixed displacement vector independent of the accumulated history. This prevents the model from capturing such conditional or magnitude-dependent dynamics.

\section{Conclusion}

We introduced \methodtitle{}, a model that learns composable representations of event sequences by enforcing geometric simplicity. Our theoretical analysis justifies that our training objective guides the model to learn a representation space with an ideal additive structure. This structure offers a clear path toward mechanistic interpretability, a claim supported by our quantitative results on the Brown Corpus where \methodtitle{} successfully clustered grammatical structures without supervision.

We also investigated a hyperbolic variant for embedding hierarchical, tree-like data. We note that unlike the commutative Euclidean model, this variant relies on Möbius addition, forming a non-associative, non-commutative Gyrovector space.

Finally, we acknowledge the trade-offs inherent in our design. While central to interpretability, our additive simplicity introduces limitations in numerical stability on long sequences. This work establishes a framework for geometrically-grounded sequence representation, paving the way for hybrid models that balance this interpretability with the capacity for more complex dynamics.

\section*{Acknowledgement}
This work was supported by the Director of the Office of Science of the U.S. Department of Energy under Contract No. DE-AC02-05CH11231

\clearpage
\bibliography{references}

\clearpage
\appendix
\subsection*{Experiments}
\subsubsection*{Life Scenario}
\label{sec:app:life}
The following figures show states and transitions of the life scenario as described by our model. We begin with  \fig{fig:blockoverview} which provides an overview of the individual blocks shown in further detail below. 

We see the education in \fig{fig:education_path}, which maps out the paths through the educational system. 
From there, \fig{fig:career} details the professional life. 
Running parallel to this, \fig{fig:relationship} illustrates the transition to states of e.g. finding partners and building a family. 
Personal choices shown in \fig{fig:lifeevents} can alter this path, while the critical interplay of wellness and wealth is detailed in \fig{fig:healthfinances}. 
The scenario transition concludes in \fig{fig:laterlife}, which shows the transition from retirement into the elder years. In each of these detailed figures, the arrows show the chance of moving from one event to the next.

\subsection*{Dataset Generation}

The synthetic Life Path dataset was generated using a stochastic process designed to simulate realistic life trajectories. The generation follows a guided random walk on a state transition graph, where nodes represent life events.

The process for generating a single sequence is as follows:
\begin{itemize}
\item \textbf{Initialization}: Each sequence begins at the predefined initial state, `birth`.
\item \textbf{Probabilistic Transition}: At each step, the next event is chosen based on the current event. For a given event, there is a predefined dictionary of possible subsequent events, each with an assigned probability weight. The next event is selected by sampling from this distribution.
\item \textbf{Stochastic Exploration}: To ensure diversity in the generated sequences and model less common life paths, a 10\% chance of a random transition is introduced at every step. In this case, instead of following the weighted probabilities, the model selects the next event uniformly at random from the entire vocabulary of possible life events.
\item \textbf{Termination}: The sequence generation continues until the terminal state, `death`, is reached or a maximum sequence length of 16 events is exceeded. If the maximum length is reached, the 'Death' event is appended to formally conclude the trajectory.
\end{itemize}
This procedure was used to generate 10,000 unique life path sequences, which formed the training set for the experiments described in Section 5.1.

\begin{figure}[h!]
    \centering
    \includegraphics[width=1.0\linewidth]{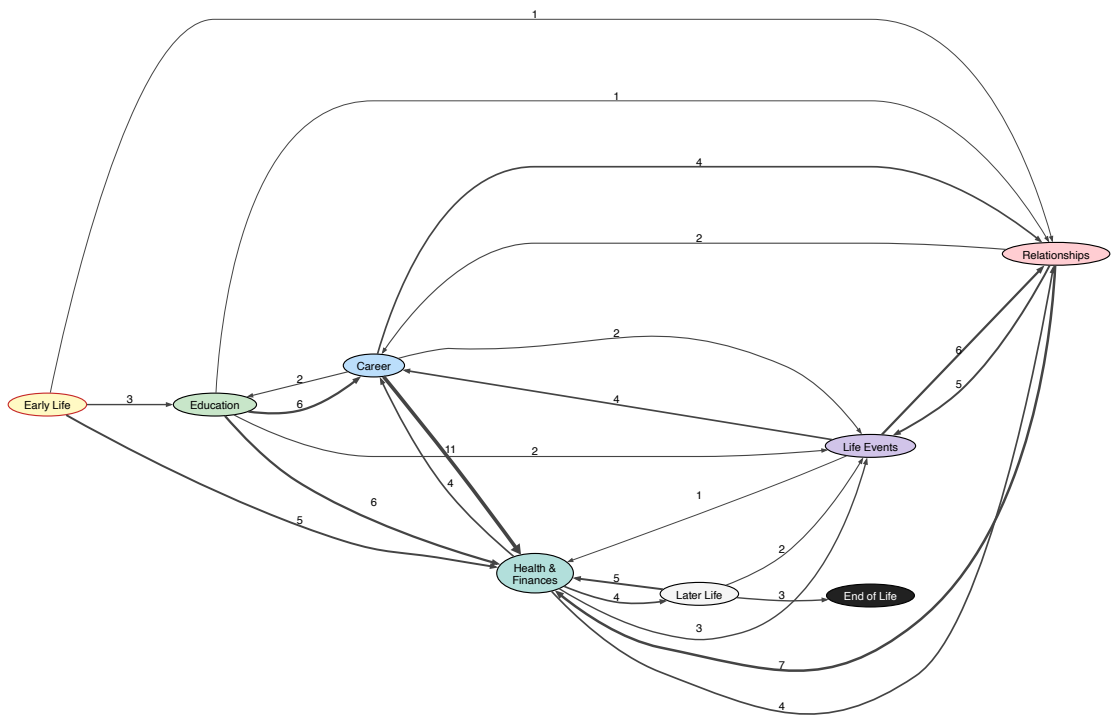}
    \caption{A high-level overview of the life model, where each node represents a major life stage. The directed edges indicate the flow between stages, with the label and thickness of each edge representing the total number of distinct transition paths. This illustrates the central roles of the \textbf{Career} and \textbf{Health \& Finances} stages as highly interconnected hubs in a person's life journey.}
    \label{fig:blockoverview}
\end{figure}

\begin{figure}[h!]
    \centering
    \includegraphics[width=1.0\linewidth]{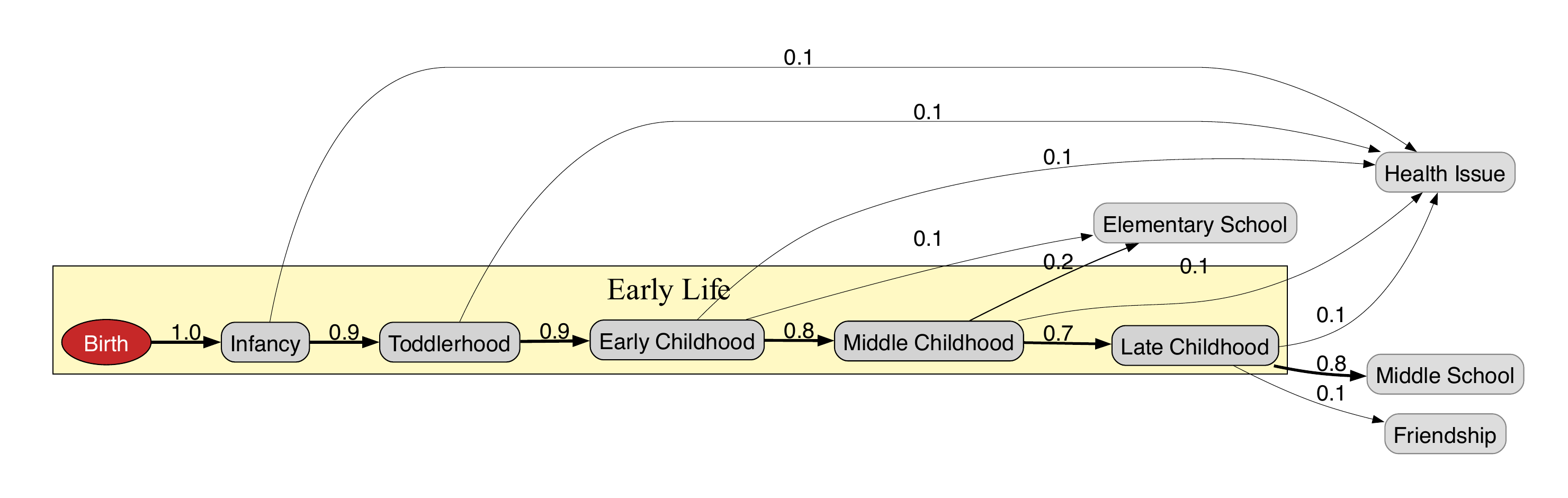}
    \caption{The Early Life stage, showing the foundational years from the `birth` event through childhood. The diagram shows a strong, linear progression through key developmental milestones, from `infancy` to `Late Childhood`, with high transition probabilities indicating a canonical path. The stage concludes with transitions to later life phases such as `Middle School` and the formation of `Friendship`. Notably, the model also captures the persistent, low-probability risk of a `Health Issue` occurring at any point during this period, demonstrating its ability to model both sequential and parallel events.}
    \label{fig:earlylife}
\end{figure}

\begin{figure}[h!]
    \centering
    \includegraphics[width=1.0\linewidth]{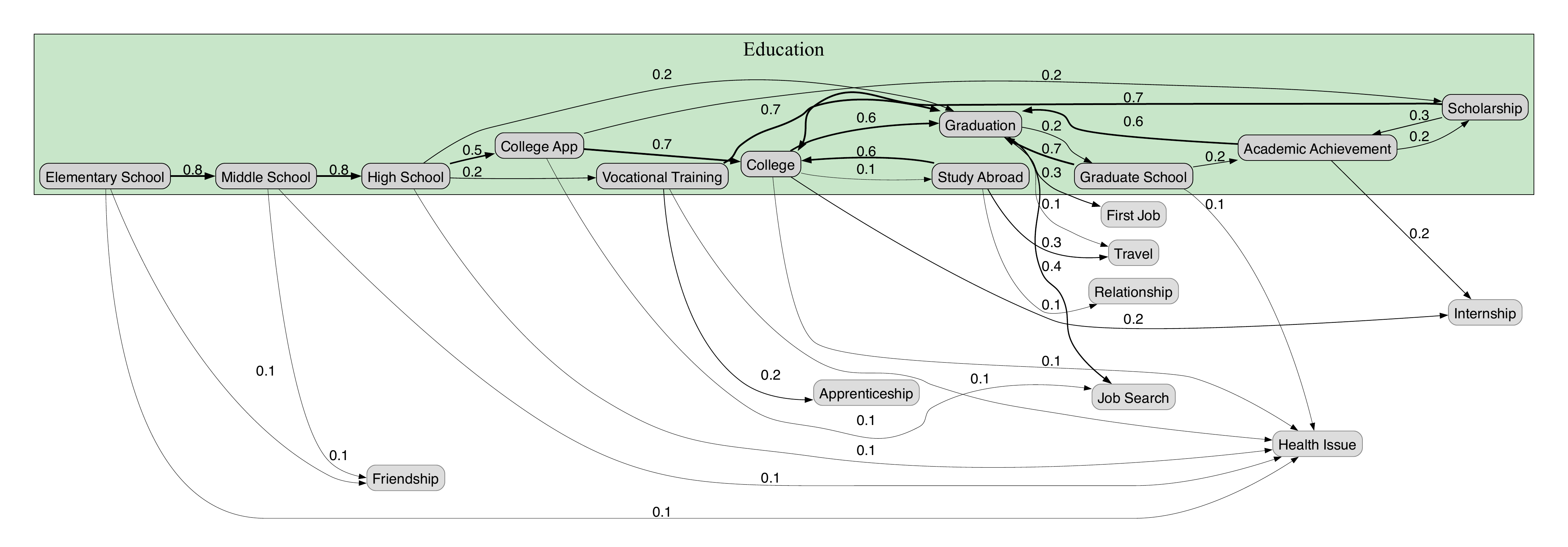}
    \caption{The Education life stage, detailing the probabilistic paths from elementary school through higher education. Nodes within the colored block represent educational milestones, while gray nodes indicate transitions to external stages like Career or Health. The numbers and thickness of the arrows correspond to the transition probabilities, highlighting the primary path towards graduation while showing alternative routes like vocational training or internships.}
    \label{fig:education_path}
\end{figure}

\begin{figure}[h!]
    \centering
    \includegraphics[width=1.0\linewidth]{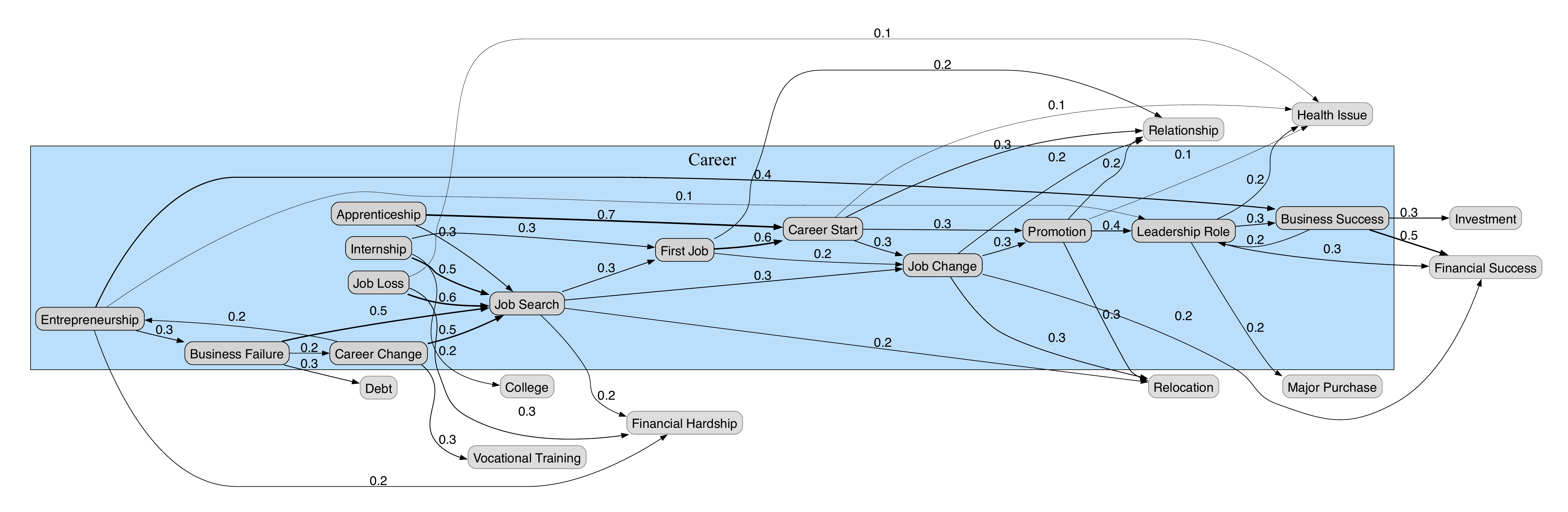}
    \caption{A model of the Career life stage, illustrating the dynamic and non-linear nature of professional life. The diagram shows the typical progression from a first job to a career start, but also includes significant feedback loops such as job loss and job searching. The probabilities on the edges guide the likelihood of events like promotions, career changes, or entrepreneurial ventures, with gray nodes showing strong connections to financial and relationship outcomes.}
    \label{fig:career}
\end{figure}

\begin{figure}[h!]
    \centering
    \includegraphics[width=1.0\linewidth]{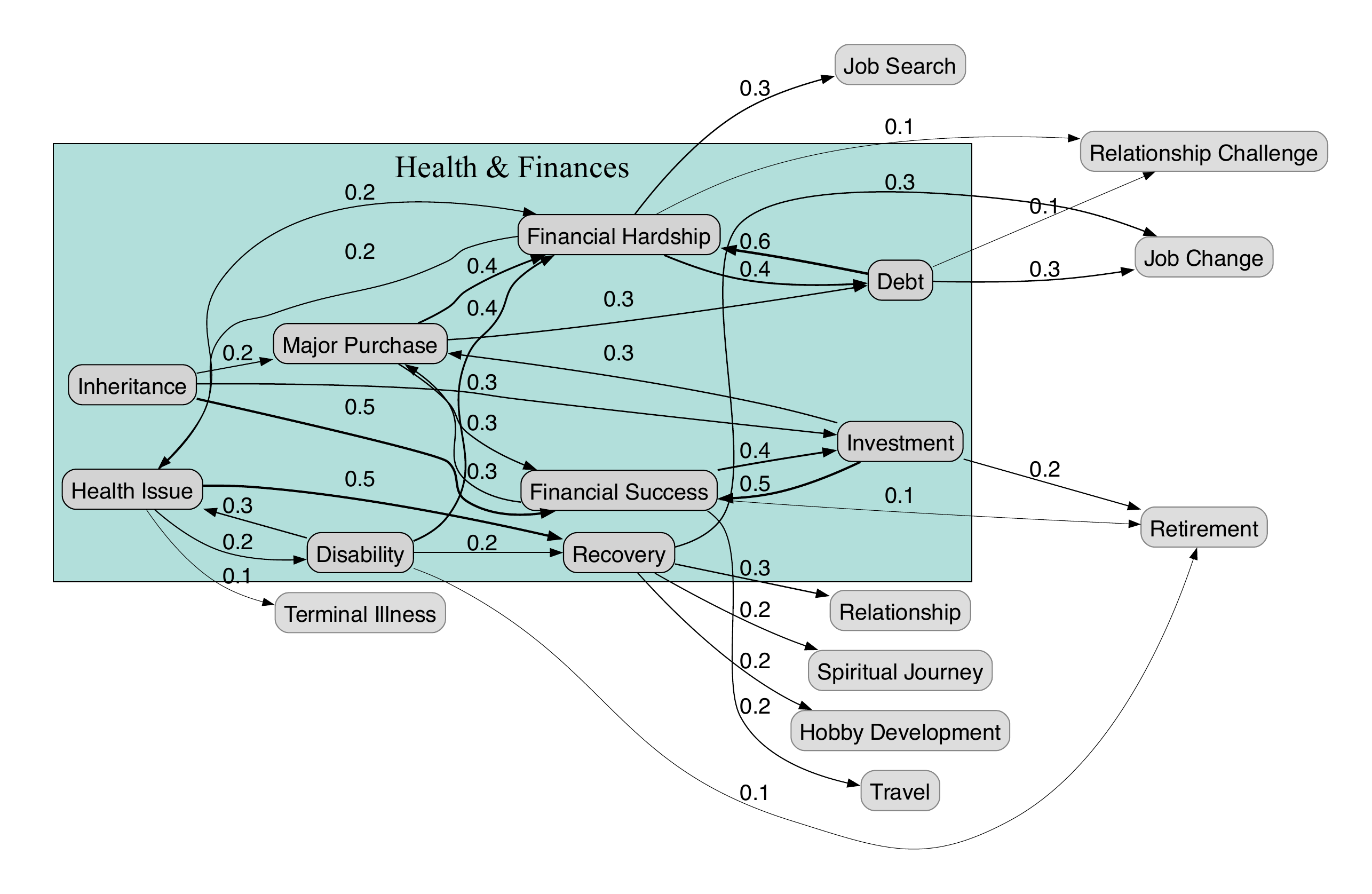}
    \caption{The Health \& Finances block, a central hub that influences all other aspects of life. This model visualizes the critical interplay between health events and financial stability. It contains feedback loops for both positive outcomes, like investment leading to financial success, and negative challenges, such as a health issue leading to financial hardship. The gray nodes show how these events directly impact career, relationships, and later life stages.}
    \label{fig:healthfinances}
\end{figure}

\begin{figure}[h!]
    \centering
    \includegraphics[width=1.0\linewidth]{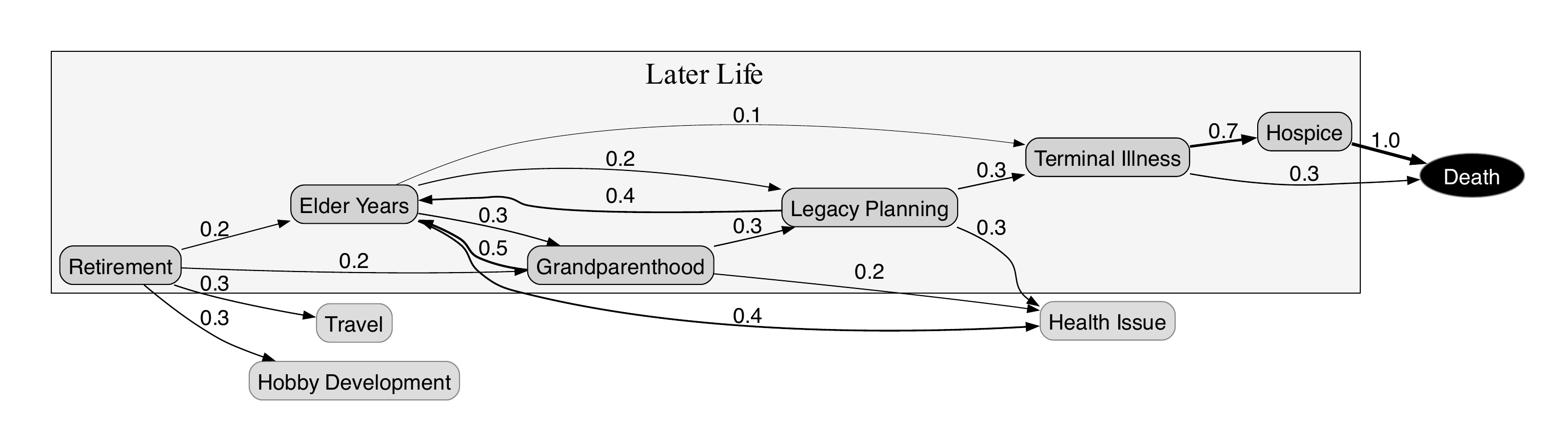}
    \caption{The Later Life stage, modeling the period from retirement to the end of life. The diagram illustrates the shift in focus post-career towards travel, hobbies, and grandparenthood. It also shows the increasing probability of health issues, which create a clear, probabilistic funnel through terminal illness and hospice care to the model's terminal state, Death.}
    \label{fig:laterlife}
\end{figure}

\begin{figure}[h!]
    \centering
    \includegraphics[width=1.0\linewidth]{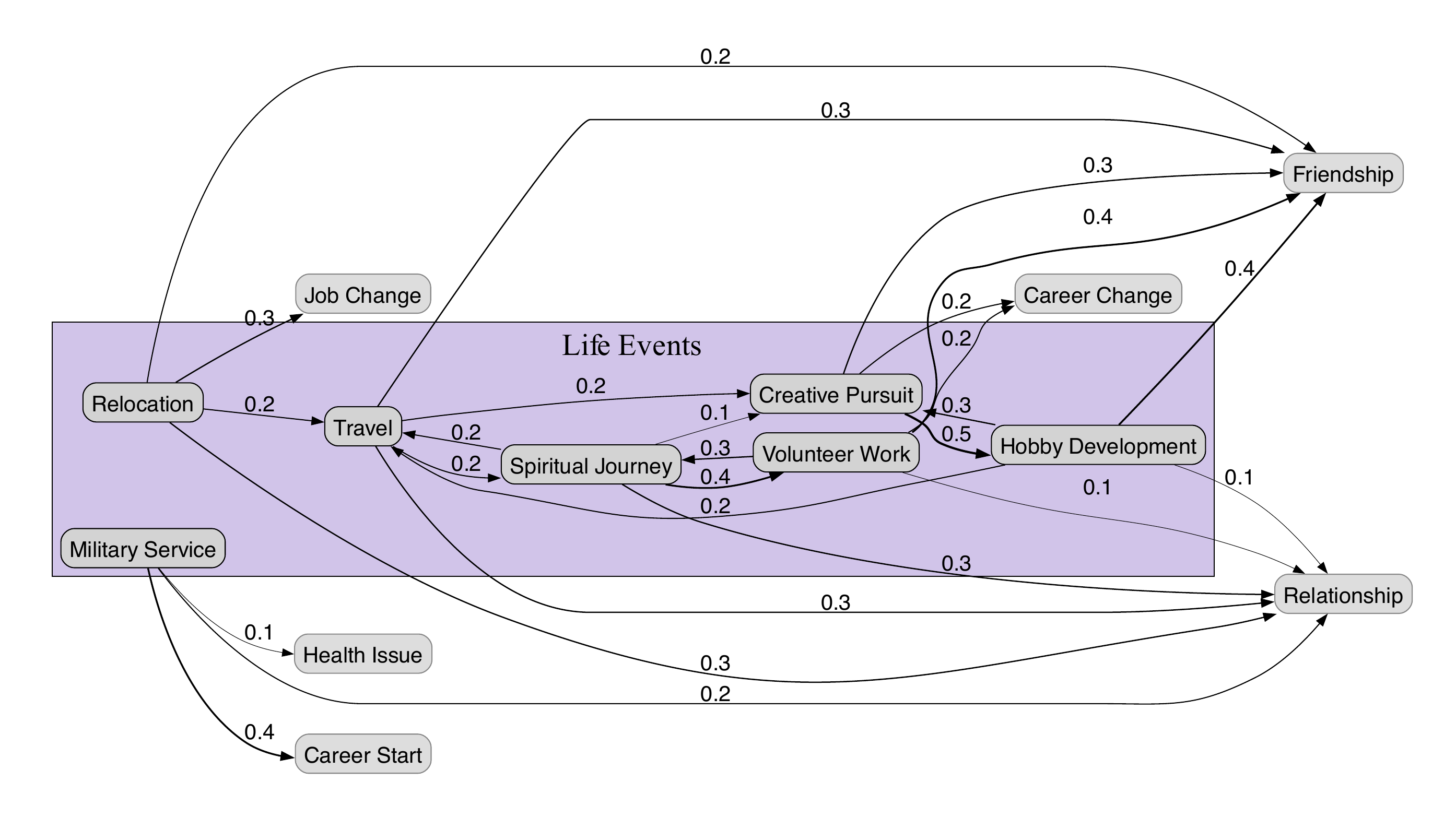}
    \caption{The Life Events block, representing significant personal milestones and pursuits that shape the life path. These events, such as relocation, travel, or military service, often act as catalysts that connect or alter the course of an individual's career and relationships. The graph shows how these pursuits are interconnected and lead to personal development or changes in other life stages.}
    \label{fig:lifeevents}
\end{figure}

\begin{figure}[h!]
    \centering
    \includegraphics[width=1.0\linewidth]{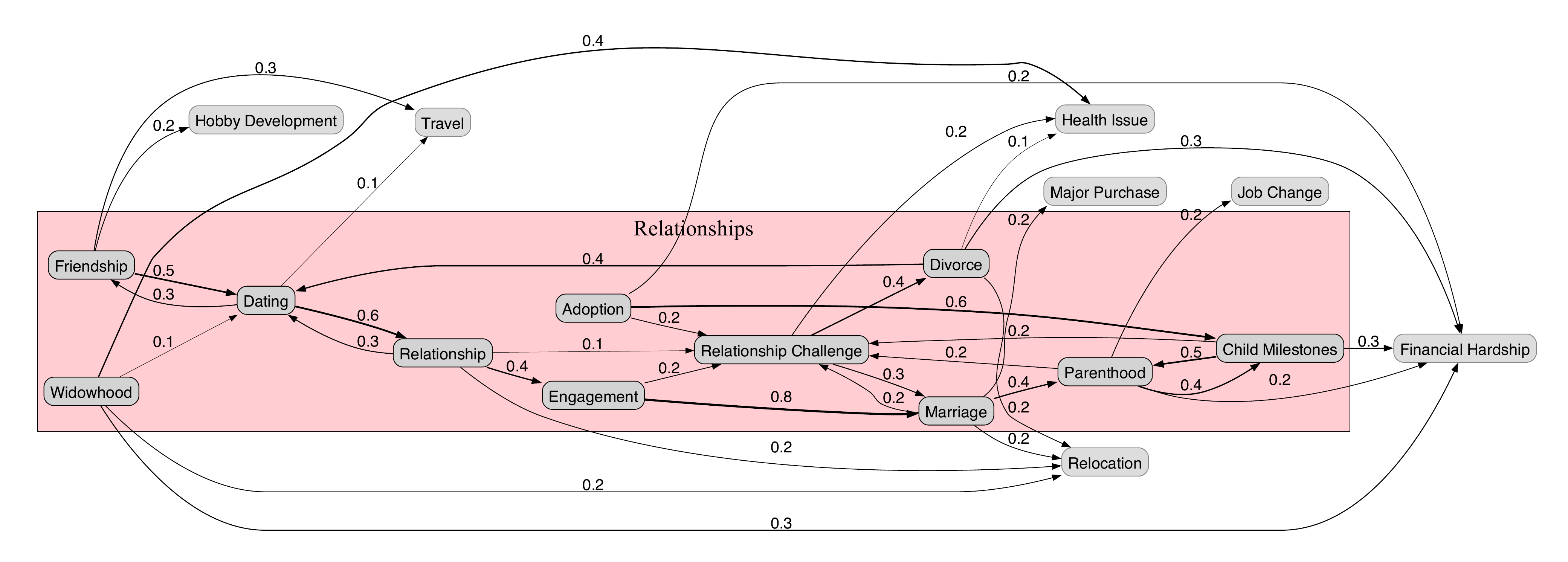}
    \caption{The Relationships life stage, mapping the progression of romantic and family connections. The model follows a common path from dating to marriage and parenthood, with probabilities indicating the likelihood of each step. Crucially, it also incorporates potential setbacks such as relationship challenges or divorce, which create cycles that can return an individual to earlier stages.}
    \label{fig:relationship}
\end{figure}

\clearpage

\subsubsection*{Quantitative Evaluation of Additivity} 
\label{sec:life:add}
We quantitatively validate our linear additive hypothesis by measuring the deviation from ideal additivity as a function of sequence length, as shown in \fig{fig:additivity}. 
In this test, random sequences of increasing length were generated and we compute the cosine similarity between the final hidden state produced by the model's recurrent updates and the ideal state calculated via a direct vector sum of the event embeddings. 
The results from \fig{fig:additivity} confirm a strong additive structure, with the observed gradual decay in similarity for longer sequences being an expected trade-off for the numerical stability provided by the norm clipping regularization discussed in Section 6.

\begin{figure}[h!]
\includegraphics[width=\linewidth]{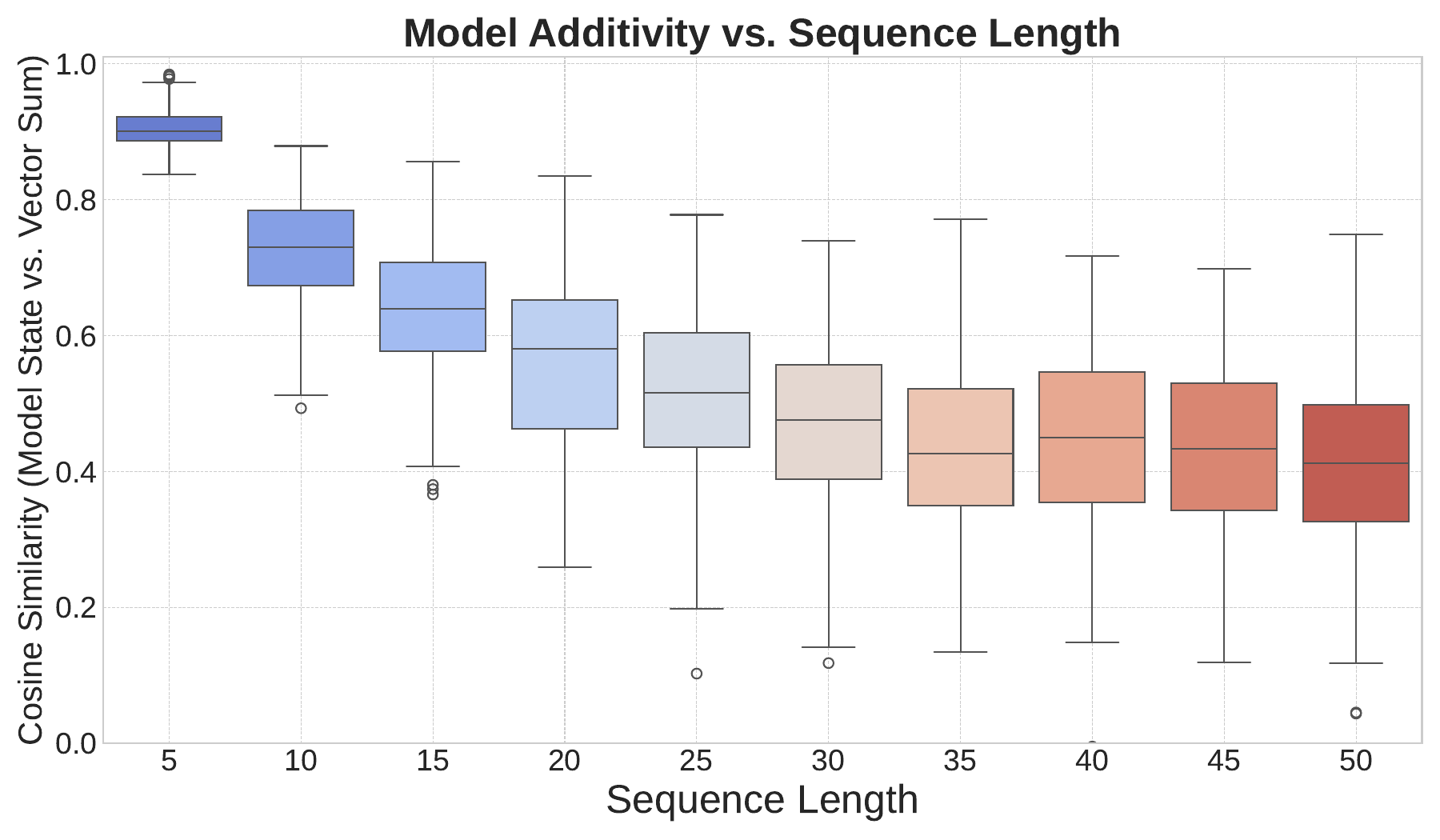}
\caption{\textbf{Quantitative Validation of Compositionality.} Cosine similarity between the final hidden state and the ideal vector sum of its parts remains high (~0.7), even for long sequences. This confirms the model maintains the Linear Additive Hypothesis effectively.}
\label{fig:additivity}
\end{figure}

\subsubsection*{Detailed Justification of Analogical Embedding}
The analogical reasoning results, presented in the \tabref{tab:analogy}, show the ability to capture data-driven relationships that go beyond simple chronological order. 
By analyzing the \textbf{Analogy Query} for each \textbf{ID}, we can interpret the geometric logic behind the Model's Result.

In \textbf{1A}, the query attempts to map the transition from birth to school onto a career context. However, the model interprets the vector `elementary\_school` - `birth` as the literal concept of "childhood development". 
When this vector is illogically added to `first\_job`, the model defaults to the most representative childhood event it knows, `late\_childhood`, although with a moderate cosine similarity of 0.4504. Conversely, in \textbf{1B}, the model correctly interprets the vector `death` - `retirement` as a "final life transition". 
Applying this concept to `graduation`, it identifies `internship` as the corresponding next step, representing the transition from academic life to professional life.

The tests in group 2 explore professional trajectories. In \textbf{2A}, the model associates the vector for career advancement `promotion` - 'career\_start` with `military\_service`. 
This suggests the model has learned that both a promotion and military service are significant, structured steps that follow initial employment, resulting in a relatively high cosine similarity of 0.5048.
Analogy test case \textbf{2B} shows the model clustering concepts of major positive financial events where the vector representing the outcome of entrepreneurship, when applied to an investment, logically yields another major financial event `inheritance` with high similarity score 0.5976.

Finally, the tests in group 3 examine relationship dynamics. 
Row \textbf{3A} shows a good example, where the model correctly identifies `adoption` as a direct parallel to `parenthood` when applying the "formalization" vector from `marriage` - `engagement`. In row \textbf{3B}, the model displays a fascinating insight: when asked to find the outcome of financial hardship by applying the vector for a relationship ending (`divorce` - `relationship\_challenge`), it returns `major\_purchase`. 
This indicates the model has likely learned from the data that a major purchase is a common cause of financial hardship, linking the two events causally rather than sequentially.
    
\end{document}